\documentclass{article}

\usepackage{microtype}
\usepackage{graphicx}
\usepackage{subfigure}
\usepackage{booktabs} 
\usepackage{times}
\usepackage{epsfig}
\usepackage{amsmath}
\usepackage{amssymb}
\usepackage{bm}
\usepackage{amsthm}
\usepackage{multirow}
\usepackage{caption}
\usepackage[super]{nth}

\usepackage{amssymb}%
\usepackage{mathrsfs}%
\usepackage{nicefrac}
\usepackage{color,xcolor}
\usepackage{pifont}
\usepackage{array}
\usepackage{wrapfig}
\usepackage{mathtools}
\usepackage{colortbl}
\usepackage{mdframed}
\usepackage[utf8]{inputenc}
\usepackage{hyperref}
\usepackage{cleveref}




\newtheorem{lemma}{Lemma}
\newtheorem{theorem}{Theorem}
\newtheorem{corollary}{Corollary}

\newcommand{\RomanNumeralCaps}[1]
    {\MakeUppercase{\romannumeral #1}}
\usepackage[accepted]{icml2018}

\icmltitlerunning{}

\begin{document}

\twocolumn[
\icmltitle{\texorpdfstring{$O(N^{2})$}{TEXT} Universal Antisymmetry in Fermionic Neural Networks}

\icmlsetsymbol{equal}{*}

\begin{icmlauthorlist}
\icmlauthor{Tianyu Pang}{to1}
\icmlauthor{Shuicheng Yan}{to1}
\icmlauthor{Min Lin}{to1}
\end{icmlauthorlist}

\icmlaffiliation{to1}{Sea AI Lab, Singapore}

\icmlcorrespondingauthor{Tianyu Pang}{tianyupang@sea.com}
\icmlcorrespondingauthor{Min Lin}{linmin@sea.com}

\icmlkeywords{Machine Learning, ICML}

\vskip 0.3in
]

\printAffiliationsAndNotice{}  

\begin{abstract}
  Fermionic neural network (FermiNet) is a recently proposed wavefunction Ansatz, which is used in variational Monte Carlo (VMC) methods to solve the many-electron Schr\"{o}dinger equation. FermiNet proposes permutation-equivariant architectures, on which a Slater determinant is applied to induce antisymmetry. FermiNet is proved to have \emph{universal} approximation capability with a single determinant, namely, it suffices to represent any antisymmetric function given sufficient parameters. However, the asymptotic computational bottleneck comes from the Slater determinant, which scales with $O(N^3)$ for $N$ electrons. In this paper, we substitute the Slater determinant with a pairwise antisymmetry construction, which is easy to implement and can reduce the computational cost to $O(N^2)$. We formally prove that the pairwise construction built upon permutation-equivariant architectures can \emph{universally} represent any antisymmetric function. Besides, this universality can be achieved via \emph{continuous} approximators when we aim to represent ground-state wavefunctions.
\end{abstract}

\section{Introduction}
How to efficiently solve fermionic quantum many-body systems has been a crucial problem in quantum physics and chemistry, as well as in a large number of applications in material science and drug discovery~\citep{keimer2017physics,heifetz2020quantum}. On the one hand, density functional theory (DFT)~\citep{hohenberg1964inhomogeneous,becke1992density} is a widely used computational method in these fields due to its computational efficiency. There are also efforts trying to approximate DFT via deep neural networks~\citep{gilmer2017neural,schutt2019unifying}. However, due to the simplified assumptions on the wavefunction, i.e. limiting the wavefunction to be Slater determinant constructed out of orthogonal single particle wavefunctions, DFT relies on the exchange and correlation functional. Only approximate forms of the functionals (such as local-density approximations (LDA) and generalized gradient approximations (GGA)) are used in practice, leading to a potentially low prediction accuracy~\citep{zhao1999local,iikura2001long}.

On the other hand, quantum Monte Carlo (QMC) is a family of Monte Carlo methods that provide more accurate approximations. Especially, variational Monte Carlo (VMC) involves a trial wavefunction to minimize the energy. The accuracy of VMC relies on how close the trial wavefunction is to the ground state. Optimizing the parameters of the wavefunction Ansatz only depends on the data sampled by Markov chain Monte Carlo (MCMC) methods~\citep{hastings1970monte}, and thus does not need supervised data.

\citet{carleo2017solving} introduce neural networks to solve the quantum many-body problem by constructing neural quantum states as the Ansatz. \citet{pfau2020ab} develop Fermionic Neural Network (FermiNet) as a wavefunction Ansatz to be used in VMC. The key insight from FermiNet is that the single-electron wavefunctions can be replaced with a permutation-equivariant neural network to capture the correlation of particles, while the Slater determinant is merely an antisymmetrizer. The model family of FermiNet is proved to be universal for representing any antisymmetric function even with a single Slater determinant. Compared to FermiNet, \citet{hermann2020deep} design PauliNet which encodes built-in physical constraints to facilitate optimization. Along with this routine, graph neural networks (GNNs) and transformers are involved to model new Ansatz with equivariance~\citep{gao2021ab,atz2021geometric,tholke2022equivariant}.

A follow up work \citet{spencer2020better} implements FermiNet in JAX, which improves six fold compared to the original TensorFlow code. It shows the potential of applying recent advancements in both software and hardware to quantum chemistry problems. However, the antisymmetry of FermiNet is still implemented via the Slater determinant, which requires $O(N^{3})$ computation for $N$-electron systems. The cost of computing the Slater determinant (or Pfaffian) will asymptotically overwhelm the other parts of the model. It limits the scalability of FermiNet and other variants that uses the Slater determinant.

To this end, several efforts have been devoted to determinant-free strategies for separately modelling the sign and amplitude of wave functions~\citep{torlai2018neural,choo2020fermionic,szabo2020neural,stokes2020phases,inui2021determinant}. Nevertheless, they are applied to spin lattice systems or molecules in second quantization, as discussed in \citet{schatzle2021convergence}. In real-space cases, \citet{han2019solving} design a pairwise $O(N^2)$ Ansatz, but we demonstrate that their Ansatz requires discontinuous approximators to represent ground-state wavefunctions (detailed in Section~\ref{sec31}).


In this paper, we develop $O(N^{2})$ pairwise Ansatz based on the permutation-equivariant multi-electron functions used in FermiNet. Our pairwise Ansatz is straightforward to implement, with minor modification on the FermiNet architecture, and can be applied to other network-based architectures like PauliNet~\citep{hermann2020deep}. We formally prove that our pairwise Ansatz can \emph{universal} represent any ground-state wavefunction with \emph{continuous} approximators, which can be modelled via finite-capacity neural networks.



\section{Preliminary}
\label{Preliminary}
We let $x=(x_{1},\cdots,x_{N})\in\mathbb{R}^{dN}$ denote the $d$-dimensional coordinates of an $N$-electron system, where $x_{i}\in\mathbb{R}^{d}$ for $i=1,\cdots,N$. Given a set of single-electron orbitals $\{\phi_{1},\cdots,\phi_{N}\}$, we can construct an antisymmetric wavefunction Ansatz via the Slater determinant as 
\begin{equation}
    \psi_{\textrm{single}}(x)=\begin{vmatrix}\phi_{1}(x_{1}) & \cdots & \phi_{1}(x_{N})\\
    \vdots & & \vdots \\
    \phi_{N}(x_{1}) & \cdots & \phi_{N}(x_{N})\end{vmatrix}=\textrm{det}[\phi_{i}(x_{j})]\textrm{.}
\end{equation}
It is known that only a small subset of all possible antisymmetric functions can be written as the form of $\psi_{\textrm{single}}(x)$, i.e., the model family of $\psi_{\textrm{single}}(x)$ is \emph{not} universal. 

To this end, FermiNet~\citep{pfau2020ab} replaces the single-electron orbitals $\phi_{i}(x_{j})$ with the multi-electron functions $\phi_{i}(x_{j};\{x_{\setminus j}\})$ modeled by neural networks, where $\{x_{\setminus j}\}$ denotes the set of all electrons except $x_{j}$. By architecture design, the functions $\phi_{i}(x_{j};\{x_{\setminus j}\})$ are invariant to the permutation of elements in $\{x_{\setminus j}\}$. The wavefunction Ansatz of FermiNet can be written as
\begin{equation}
    \psi_{\textrm{Fermi}}(x)=\textrm{det}\left[\phi_{i}(x_{j};\{x_{\setminus j}\})\right]\textrm{.}
    \label{equ11}
\end{equation}
The model family of $\psi_{\textrm{Fermi}}(x)$ is proved to be universal to represent any antisymmetric function, as stated below:
\begin{lemma}
(Universality by \citet{pfau2020ab}) For any antisymmetric function $\Psi(x)$, there exist multi-electron functions $\phi_{1},\cdots\phi_{N}$, such that $\forall x$, there is $\psi_{\textrm{Fermi}}(x)=\Psi(x)$.
\end{lemma}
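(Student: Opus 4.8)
The plan is to exhibit explicit multi-electron functions $\phi_{1},\cdots,\phi_{N}$ whose determinant reproduces $\Psi$ pointwise, exploiting the fact that each $\phi_{i}(x_{j};\{x_{\setminus j}\})$ may depend arbitrarily on the distinguished electron $x_{j}$ and on the \emph{unordered set} of the remaining electrons. First I would fix an arbitrary total order $\prec$ on $\mathbb{R}^{d}$ (e.g.\ lexicographic) and, for a configuration $x$ with distinct coordinates, assign to each electron its rank $r_{j}(x)=1+|\{k\neq j:x_{k}\prec x_{j}\}|\in\{1,\cdots,N\}$. The crucial observation is that $r_{j}(x)$ depends only on $x_{j}$ and on the set $\{x_{\setminus j}\}$, hence is invariant under permutations of the remaining electrons, so any function built from it respects the architectural constraint imposed on the $\phi_{i}$.

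Next I would define the matrix entries so that $[\phi_{i}(x_{j};\{x_{\setminus j}\})]$ becomes a scaled permutation matrix. Concretely, let $\textrm{sort}(x)$ denote the configuration obtained by ordering all electrons ascending under $\prec$; since $\textrm{sort}$ discards the ordering, the map $x\mapsto\Psi(\textrm{sort}(x))$ is a fully symmetric function of $x$ and is therefore invariant to permutations of $\{x_{\setminus j}\}$. Set $\phi_{1}(x_{j};\{x_{\setminus j}\})=\mathbf{1}[r_{j}(x)=1]\cdot\Psi(\textrm{sort}(x))$ and, for $i\geq 2$, $\phi_{i}(x_{j};\{x_{\setminus j}\})=\mathbf{1}[r_{j}(x)=i]$. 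Each column of the resulting matrix has a single nonzero entry, since the rank of electron $j$ is unique, so the matrix is a permutation matrix with the rank-one slot rescaled by $\Psi(\textrm{sort}(x))$.

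The determinant computation then closes the argument. Writing $\pi$ for the permutation sending each electron to its rank, $\pi(j)=r_{j}(x)$, the only surviving term in the Leibniz expansion yields $\textrm{det}[\phi_{i}(x_{j})]=\textrm{sgn}(\pi)\cdot\Psi(\textrm{sort}(x))$. Because $\Psi$ is antisymmetric, reindexing its arguments by the sorting permutation incurs exactly the sign $\textrm{sgn}(\pi)$, i.e.\ $\Psi(x)=\textrm{sgn}(\pi)\Psi(\textrm{sort}(x))$, so the determinant equals $\Psi(x)$ as desired. The coincidence set where two coordinates agree is handled for free: antisymmetry forces $\Psi(x)=-\Psi(x)=0$ there, and $\textrm{sort}(x)$ likewise produces coincident arguments so that $\Psi(\textrm{sort}(x))=0$, making both sides vanish regardless of how ties in $\prec$ are broken.

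I expect the main obstacle to be conceptual rather than computational: verifying that the rank indicators and the symmetrized amplitude $\Psi\circ\textrm{sort}$ genuinely satisfy the permutation-invariance constraint on the $\phi_{i}$, and being careful that the order is defined everywhere (including the measure-zero coincidence set) and that the two occurrences of $\textrm{sgn}(\pi)$ cancel correctly rather than compound. It is worth emphasizing that this construction is inherently \emph{discontinuous} --- the rank function and the sorting map jump as electrons cross each other under $\prec$ --- which is precisely the deficiency that motivates seeking \emph{continuous} approximators later in the paper.
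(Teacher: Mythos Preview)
Your construction is correct and is essentially the standard argument from Appendix~B of \citet{pfau2020ab}, which the paper cites but does not reproduce; the paper itself offers no proof for this lemma, treating it as a known result. The closest in-paper analogue is the proof of Corollary~\ref{corollary1}, which uses the same sorting-permutation idea (pick a canonical ordering, define the $\phi$'s so that the antisymmetrizer recovers the sorting sign times the value on the sorted tuple) adapted to the Vandermonde/pairwise structure rather than to a full determinant, so your approach is fully aligned with both the cited source and the paper's own methodology.
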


In practice, FermiNet uses multiple $\psi_{\textrm{Fermi}}^{k}(x)$ to achieve high accuracy for $k=1,\cdots,K$, and the ensemble wavefunction $\Psi_{\textrm{Fermi}}(x)$ is constructed as $\Psi_{\textrm{Fermi}}(x)=\sum_{k}\omega_{k}\psi_{\textrm{Fermi}}^{k}(x)$ where $\omega_{k}$ is the weight of $k$-th component.

\section{\texorpdfstring{$O(N^{2})$}{TEXT} Universal Antisymmetry}
Evaluating the Slater determinant operator in Eq.~\eqref{equ11} requires $O(N^{3})$ computation, which asymptotically limits the scalability of FermiNet and most of the other VMC methods. Given this, we develop an easy-to-implement $O(N^{2})$ substitute for the Slater determinant to build antisymmetric functions on top of a permutation-equivariant function. Technically, we require a function $F(x_i,x_j;\{x_{\setminus\{i,j\}}\})$ that is invariant to any permutation on the set $\{x_{\setminus\{i,j\}}\}$. Then we construct the \textbf{pairwise wavefunction Ansatz} as
\begin{equation}
    \psi_{\textrm{pair}}(x)=\!\!\prod_{1\leq i < j \leq N}\!\! \mathcal{A}\circ F(x_i,x_j;\{x_{\setminus\{i,j\}}\})\textrm{,}
    \label{equ34}
\end{equation}
where $\mathcal{A}$ is an antisymmetrizer such that $\mathcal{A}\circ F(x_{i},x_{j})=-\mathcal{A}\circ F(x_{j},x_{i})$ holds.\footnote{For notation compactness, here we omit the dependence of $F(x_i,x_j)$ on other elements in $\{x_{\setminus\{i,j\}}\}$ without ambiguity.} Now we demonstrate that $\psi_{\textrm{pair}}(x)$ is also antisymmetric:
\begin{lemma}
(Antisymmetry) $\psi_{\textrm{pair}}(x)$ is antisymmetric under the permutation of any two elements $x_{m}$ and $x_{n}$ in $x$.
\label{theorem1}
\end{lemma}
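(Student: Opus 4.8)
The plan is to fix an arbitrary transposition $\tau$ swapping positions $m$ and $n$ (say $m<n$), write $x'$ for the permuted coordinates ($x'_m=x_n$, $x'_n=x_m$, and $x'_k=x_k$ otherwise), and show that the product defining $\psi_{\textrm{pair}}$ acquires exactly one overall factor of $-1$. Abbreviating $g(a,b)=\mathcal{A}\circ F(x_a,x_b;\{x_{\setminus\{a,b\}}\})$ — which by construction satisfies $g(a,b)=-g(b,a)$ and is invariant to permutations within its context set — the key is to classify the factors of $\prod_{1\leq i<j\leq N} g(i,j)$ according to how the index pair $\{i,j\}$ meets $\{m,n\}$.

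First I would dispatch the two easy classes. The single factor indexed by $\{m,n\}$ becomes $g(n,m)=-g(m,n)$, contributing the lone sign flip; here I must check that its context set $\{x_{\setminus\{m,n\}}\}$ is literally unchanged by $\tau$, so nothing else enters. For any pair $\{i,j\}$ with $i,j\notin\{m,n\}$, the arguments $x_i,x_j$ are untouched, and although $x_m$ and $x_n$ are interchanged inside the context set $\{x_{\setminus\{i,j\}}\}$, the permutation-invariance of $F$ in its context leaves the factor unchanged.

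The crux — and the step I expect to be the main obstacle — is the ``mixed'' factors, those whose pair contains exactly one of $m,n$. For each $k\notin\{m,n\}$ there are precisely two of them, from the pairs $\{k,m\}$ and $\{k,n\}$. Under $\tau$ these two factors get exchanged: after the swap the factor sitting at $\{k,m\}$ carries the arguments and context that the $\{k,n\}$ factor carried before, and vice versa. The subtlety is that the fixed ``$i<j$'' ordering convention can force an antisymmetry sign $g(b,a)=-g(a,b)$ onto one or both of them, depending on whether $k$ lies outside or strictly between $m$ and $n$; I would verify case-by-case that these induced signs always occur in pairs and hence cancel, so the product of each such pair of mixed factors is invariant. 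Consequently all mixed factors together contribute $+1$.

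Combining the three classes yields $\psi_{\textrm{pair}}(x')=-\psi_{\textrm{pair}}(x)$, which is the claimed antisymmetry under $(m\,n)$. An equivalent and more compact route is to re-index the entire product by $\tau$: one shows $\prod_{i<j} g(\tau(i),\tau(j))=\bigl(\prod_{i<j}\epsilon_{ij}\bigr)\,\psi_{\textrm{pair}}(x)$, where $\epsilon_{ij}=\pm1$ records whether $\tau$ reverses the order of $\{i,j\}$, and the accumulated sign $\prod_{i<j}\epsilon_{ij}$ is exactly $\mathrm{sgn}(\tau)=-1$ by the standard inversion count for a transposition. This packages the delicate mixed-factor cancellation into a single parity argument.
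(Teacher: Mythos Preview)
Your proposal is correct and follows essentially the same approach as the paper: both fix $m<n$, classify the factors of the product according to how the index pair meets $\{m,n\}$, and count the resulting sign flips. The paper's version is terser---it only lists the sign-changing terms (the $\{m,n\}$ factor and the two mixed factors for each $k$ with $m<k<n$) and observes that their total count $1+2(n-m-1)$ is odd---whereas you spell out explicitly why the remaining factors (disjoint pairs, and mixed pairs with $k<m$ or $k>n$) are unchanged, including the context-set invariance check that the paper leaves implicit.
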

\begin{proof}
Assuming that $m<n$ without loss of generality, after we permute $x_{m}$ and $x_{n}$, the term $\mathcal{A}\circ F(x_m,x_i)$ changes to $\mathcal{A}\circ F(x_i,x_m)$ and the term $\mathcal{A}\circ F(x_i,x_n)$ changes to $\mathcal{A}\circ F(x_n,x_i)$ for $i\in(m,n)$. Besides, the term $\mathcal{A}\circ F(x_m,x_n)$ changes to $\mathcal{A}\circ F(x_n,x_m)$. Note that each term change contributes to a sign change (i.e., multiplying $-1$). So the total number of sign changes is $1+2\times(n-m-1)$, which is always an odd number and thus $\psi_{\textrm{pair}}(x)$ is antisymmetric.
\end{proof}

A simple implementation of $\mathcal{A}\circ F$ is $\mathcal{A}\circ F(x_{i},x_{j})=F(x_{i},x_{j})-F(x_{j},x_{i})$, which requires $O(1)$ extra computation on top of computing $F$. Since there are $C_{N}^{2}=N\times(N-1)/2$ terms in $\psi_{\textrm{pair}}(x)$, totally it requires $O(N^2)$ extra computation to induce antisymmetry. In the following, we analyse how different choices of $\mathcal{A}\circ F$ influence the \emph{universality} and \emph{continuity} of $\psi_{\textrm{pair}}(x)$.

\subsection{Instantiation from \citet{han2019solving}}
\label{sec31}
\citet{han2019solving} developed an $O(N^{2})$ scaling Ansatz $\psi_{\textrm{Han}}(x)$ formulated as
\begin{equation}
     \psi_{\textrm{Han}}(x)=\phi_{C}(x)\cdot\!\!\!\!\!\!\prod_{1\leq i<j\leq N}\!\!\!\!\!\!\left(\phi_{B}(x_{j},x_{i})-\phi_{B}(x_{i},x_{j})\right)\textrm{.}
    \label{equ30}
\end{equation}
Here $\phi_B(x_{j},x_{i})$ is a two-electron function, and $\phi_{C}(x)=\phi_{C}(x_{1},\cdots,x_{N})$ is a multi-electron symmetric function. The form of $\psi_{\textrm{Han}}(x)$ can be regarded as a special instantiation of $\psi_{\textrm{pair}}(x)$, where
\begin{equation*}
    \mathcal{A}\circ F(x_{i},x_{j})=\left(\phi_{B}(x_{j},x_{i})-\phi_{B}(x_{i},x_{j})\right)\cdot\phi_{C}(x)^{\frac{2}{N(N-1)}}\textrm{.}
\end{equation*}

We can show that for ground-state wavefunctions $\Psi^{g}(x)$ of fermionic systems,\footnote{We assume that the ground-state $\Psi^{g}(x)$ is continuous on $\mathbb{R}^{dN}$.} we \emph{cannot} find continuous constructions of both $\phi_{B}$ and $\phi_{C}$ to satisfy $\psi_{\textrm{Han}}(x)=\Psi^{g}(x)$, which makes it non-trivial for neural networks to approximate $\phi_{B}$ and $\phi_{C}$. To prove this, we resort to the tiling property of ground-state antisymmetric wavefunctions, as stated below:
\begin{lemma}
(Tiling property by \citet{ceperley1991fermion}) We define $\mathcal{N}(\Psi^{g})=\{x|\Psi^{g}(x)=0\}$ to be the node set of ground-state $\Psi^{g}$, and $\Omega(\Psi^{g},x)$ be the nodal cell around the point $x$. Then the tiling property claims that $\forall x \not\in \mathcal{N}(\Psi^{g})$,
\begin{equation}
    \bigcup_{\pi}\Omega(\Psi^{g},x_{\pi})=\mathbb{R}^{dN}\setminus \mathcal{N}(\Psi^{g})\textrm{,}
    \label{equ39}
\end{equation}
where $x_{\pi}$ is the input $x$ permuted by $\pi$ and the union $\bigcup_{\pi}$ is taken over all possible permutations.
\end{lemma}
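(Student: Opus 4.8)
The plan is to establish the tiling property by combining the variational characterization of the ground state with the (assumed) non-degeneracy of $\Psi^{g}$, which is the reflection argument underlying Ceperley's result. Since permutations act on configuration space with $\Psi^{g}(x_{\pi})=\mathrm{sgn}(\pi)\,\Psi^{g}(x)$, each permutation $\pi$ maps the nodal cell $\Omega(\Psi^{g},x)$ bijectively onto $\Omega(\Psi^{g},x_{\pi})$; hence $\bigcup_{\pi}\Omega(\Psi^{g},x_{\pi})$ is exactly the orbit of a single cell under the symmetric group $S_{N}$. The claim therefore reduces to showing that $S_{N}$ acts with a single orbit on the set of nodal cells, i.e.\ every cell is reachable from every other by a permutation. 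I would argue this by contradiction: assuming two cells lie in distinct orbits, I construct an antisymmetric competitor of energy exactly $E_{0}$ (the ground energy) that cannot be proportional to $\Psi^{g}$, violating uniqueness.

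First I would record the cell-local Schr\"{o}dinger property. Writing $H=-\tfrac12\sum_{i}\nabla_{i}^{2}+V$ for the many-body Hamiltonian with ground energy $E_{0}$, the restriction of $\Psi^{g}$ to any nodal cell $\Omega_{i}$ solves $H\Psi^{g}=E_{0}\Psi^{g}$ with Dirichlet data on $\partial\Omega_{i}\subset\mathcal{N}(\Psi^{g})$, where $\Psi^{g}$ vanishes. Integrating by parts on $\Omega_{i}$ and using that the boundary term $\oint_{\partial\Omega_{i}}\Psi^{g}\,\partial_{n}\Psi^{g}$ vanishes on the nodes yields the identity $\int_{\Omega_{i}}\big(\tfrac12|\nabla\Psi^{g}|^{2}+V|\Psi^{g}|^{2}\big)=E_{0}\int_{\Omega_{i}}|\Psi^{g}|^{2}$. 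The crucial point is that the same Rayleigh value $E_{0}$ is attained \emph{inside every single cell}.

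Next, supposing the orbit $\mathcal{O}=\bigcup_{\pi}\pi(\Omega)$ of one cell fails to exhaust $\mathbb{R}^{dN}\setminus\mathcal{N}(\Psi^{g})$, I define the competitor $\Phi:=\Psi^{g}$ on $\mathcal{O}$ and $\Phi:=0$ elsewhere. Because $\mathcal{O}$ is permutation-invariant and $\Psi^{g}$ is antisymmetric, $\Phi$ is antisymmetric; because $\Psi^{g}$ vanishes continuously on the cell boundaries separating $\mathcal{O}$ from its complement, $\Phi$ lies in $H^{1}$ with $\nabla\Phi=\nabla\Psi^{g}$ on $\mathcal{O}$ and $0$ outside. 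Summing the cell-local identity over the cells of $\mathcal{O}$ shows that the Rayleigh quotient of $\Phi$ equals $E_{0}$, so $\Phi$ minimizes the antisymmetric energy and is itself a ground state. By non-degeneracy, $\Phi=c\,\Psi^{g}$; but $\Phi\equiv0$ on the nonempty open complement of $\mathcal{O}$ where $\Psi^{g}\neq0$, forcing $c=0$ and contradicting $\Phi=\Psi^{g}\neq0$ on $\mathcal{O}$. Hence a single orbit covers everything, which is precisely Eq.~\eqref{equ39}.

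I expect the main obstacle to be the functional-analytic justification that $\Phi\in H^{1}$ and that truncating $\Psi^{g}$ exactly along its own node set introduces no spurious interface contribution to the gradient, so that the Rayleigh quotient is genuinely unchanged; this is where the assumed continuity of $\Psi^{g}$ on $\mathbb{R}^{dN}$ and the regularity of the nodal surface enter. A secondary subtlety is the reliance on the ground state being real and unique up to scaling: with degeneracy the competitor $\Phi$ need only be \emph{a} ground state rather than a multiple of $\Psi^{g}$, and the argument would have to be refined accordingly.
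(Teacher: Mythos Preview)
The paper does not supply its own proof of this lemma; it is quoted as an external result from \citet{ceperley1991fermion} and then used as a black box in the subsequent theorems. Your sketch is essentially Ceperley's original reflection/variational argument: show that the symmetric group permutes nodal cells, assume an $S_{N}$-orbit of cells is proper, truncate $\Psi^{g}$ to that orbit to obtain an antisymmetric $H^{1}$ trial function with Rayleigh quotient exactly $E_{0}$, and contradict non-degeneracy of the fermionic ground state. So there is nothing to compare against in the paper itself, but your proposal matches the argument of the cited source, and you have correctly isolated the two places where care is needed---the interface regularity ensuring $\Phi\in H^{1}$ with no boundary contribution, and the reliance on a non-degenerate ground state.
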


According to the tiling property of ground-state wavefunctions, we can further prove the following:

\begin{theorem}
(Discontinuity) There exist ground-state wavefunctions $\Psi^{g}(x)$, such that if there is $\forall x$, $\psi_{\textrm{Han}}(x)=\Psi^{g}(x)$, then either $\phi_B$ or $\phi_C$ must be discontinuous on $\mathbb{R}^{dN}$. 
\end{theorem}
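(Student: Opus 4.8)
The plan is to argue by contradiction. Suppose that $\psi_{\textrm{Han}}(x)=\Psi^{g}(x)$ for all $x$ and that both $\phi_B$ and $\phi_C$ are continuous. Writing the antisymmetrized two-body factor as $g(x_i,x_j):=\phi_B(x_j,x_i)-\phi_B(x_i,x_j)$, continuity of $\phi_B$ makes $g$ a continuous antisymmetric function ($g(x_i,x_j)=-g(x_j,x_i)$) on $\mathbb{R}^{2d}$. The crucial structural remark is that $g(x_1,x_2)$ is one factor of $\psi_{\textrm{Han}}=\phi_C\prod_{i<j}g(x_i,x_j)$ and depends only on $x_1,x_2$; hence $g(x_1,x_2)=0$ forces $\psi_{\textrm{Han}}(x)=0$, and therefore $\Psi^{g}(x)=0$, for every value of the untouched coordinates $x_3,\dots,x_N$. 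I will derive a contradiction by showing that the zero set of $g$ is necessarily large.

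The main step is a topological separation argument. Since $g$ is antisymmetric, the swap $(x_1,x_2)\mapsto(x_2,x_1)$ maps $\{g>0\}$ bijectively onto $\{g<0\}$, and both are open and nonempty (if one were empty then $g\equiv 0$, forcing $\psi_{\textrm{Han}}\equiv 0\neq\Psi^{g}$). Thus $W:=\{(x_1,x_2)\in\mathbb{R}^{2d}:g(x_1,x_2)=0\}$ is closed with disconnected complement $\{g>0\}\sqcup\{g<0\}$, i.e.\ $W$ separates $\mathbb{R}^{2d}$. By the standard fact that a closed set separating $\mathbb{R}^{n}$ has topological dimension at least $n-1$, we obtain $\dim W\geq 2d-1$. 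For $d\geq 2$ this strictly exceeds the dimension $d$ of the coincidence diagonal $\{x_1=x_2\}$, so $W$ must contain off-diagonal points; consequently the cylinder $W\times\mathbb{R}^{d(N-2)}$, of dimension at least $dN-1$ and free in $x_3,\dots,x_N$, is contained in $\mathcal{N}(\Psi^{g})$. I expect this to be the main obstacle, since it is exactly where continuity of $\phi_B$ is turned into a rigid, coordinate-decoupled piece of the node; the delicate points are to make the dimension bound precise and to note that it genuinely needs $d\geq 2$ (for $d=1$ the diagonal alone can separate $\mathbb{R}^{2}$, consistent with one-dimensional ground states being representable).

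It remains to exhibit a ground state $\Psi^{g}$ whose node contains no such two-body cylinder. I would take a generic noninteracting ground state $\Psi^{g}(x)=\det[\phi_k(x_l)]$ with $N\geq 3$, $d\geq 2$; this is a bona fide fermionic ground state and, by the tiling property of ground states, its nodal cells form a single permutation orbit, which pins down the clean nodal structure used below. Fixing $(x_1,x_2)$ and generic $x_4,\dots,x_N$ and expanding the determinant along the column of $x_3$ writes $\Psi^{g}$ as a nontrivial combination $\sum_k C_k\,\phi_k(x_3)$; for linearly independent orbitals it vanishes identically in $x_3$ only when all the relevant maximal minors $C_k$ vanish, a condition of codimension at least two in $(x_1,x_2)$. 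Since $\dim W\geq 2d-1$ strictly exceeds the dimension of this bad locus together with the diagonal, there is a point $(x_1,x_2)\in W$ at which $\Psi^{g}$ does not vanish for all $x_3$, contradicting the conclusion of the previous paragraph that $\Psi^{g}\equiv 0$ on $W\times\mathbb{R}^{d(N-2)}$. This rules out continuity of $\phi_B$. Finally, the only way to keep $\psi_{\textrm{Han}}$ nonzero on $W\times\mathbb{R}^{d(N-2)}$ while $g$ vanishes there is to let $\phi_C$ be unbounded on that set, i.e.\ discontinuous; hence in every case at least one of $\phi_B,\phi_C$ must be discontinuous, which is the claim.
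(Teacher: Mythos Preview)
Your approach is correct in outline but takes a genuinely different route from the paper's. The paper never invokes topological dimension; instead it uses a purely algebraic sign identity: with $a_{i,j}=\phi_B(x_j,x_i)-\phi_B(x_i,x_j)$ and $A_{i,j,k}=a_{i,j}a_{i,k}a_{j,k}$, one always has
\[
A_{1,2,3}\,A_{1,2,4}\,A_{1,3,4}\,A_{2,3,4}=(a_{1,2}a_{1,3}a_{1,4}a_{2,3}a_{2,4}a_{3,4})^{2}\ge 0,
\]
so any three-electron ground state with $\mathrm{sgn}(\Psi^{g}_{1,2,3}\Psi^{g}_{1,2,4}\Psi^{g}_{1,3,4}\Psi^{g}_{2,3,4})=-1$ forces a sign mismatch between $A(\cdot;\phi_B)$ and $\Psi^{g}$ at some point; the tiling property is then used in earnest to propagate that mismatch to every nodal cell and produce a zero of $A$ outside $\mathcal{N}(\Psi^{g})$. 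Your argument instead says that $W=\{g=0\}$ separates $\mathbb{R}^{2d}$, hence $\dim W\ge 2d-1$, which for $d\ge 2$ exceeds the dimension of the locus where a noninteracting Slater determinant vanishes identically in the remaining coordinates. Two caveats on your version: the ``codimension at least two'' claim does not follow from linear independence of the orbitals alone and needs a concrete choice (e.g.\ the $2$D harmonic oscillator with $N=3$, where the bad locus is exactly the diagonal $\{x_1=x_2\}$, makes this clean; you then do not need the tiling property at all, despite invoking it); and your final paragraph is slightly off---once $g(x_1,x_2)=0$ at a point with $\Psi^{g}\neq 0$, no value of $\phi_C$, bounded or not, can rescue $\psi_{\textrm{Han}}=0$ there, so the contradiction already shows $\phi_B$ itself must be discontinuous. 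The paper's route is more elementary and works in any spatial dimension; yours is more geometric and highlights that a continuous two-body factor necessarily carries a codimension-one zero set decoupled from the remaining coordinates.
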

\begin{proof}
For notation compactness, we denote
\begin{equation}
    A(x;\phi_{B})=\prod_{i<j}\left(\phi_{B}(x_{j},x_{i})-\phi_{B}(x_{i},x_{j})\right)\textrm{,}
\end{equation}
and thus $\psi_{\textrm{Han}}(x)=A(x;\phi_{B})\cdot\phi_{C}(x)$. We first prove that there exist ground-state $\Psi^{g}(x)$, such that for any $\phi_B$, the sign of $A(x;\phi_{B})$ \emph{cannot} keep aligned with the sign of $\Psi^{g}(x)$. Specifically, we consider a three-electron system, where we have four coordinate candidates $x_1,x_2,x_3,x_4$. For simplicity, we write $A_{i,j,k}=A(x_i,x_j,x_k;\phi_{B})$ and $a_{i,j}=\phi_{B}(x_{j},x_{i})-\phi_{B}(x_{i},x_{j})$, where $A_{i,j,k}=a_{i,j}\cdot a_{i,k}\cdot a_{j,k}$. Now we can derive that
\begin{equation*}
    \begin{split}
        &A_{1,2,3}\cdot A_{1,2,4}\cdot A_{1,3,4}\cdot A_{2,3,4}\\
        =&a_{1,2}^{2}\cdot a_{1,3}^{2}\cdot a_{1,4}^{2}\cdot a_{2,3}^{2}\cdot a_{2,4}^{2}\cdot a_{3,4}^{2}\geq 0\textrm{,}
    \end{split}
\end{equation*}
which holds for any $x_1,x_2,x_3,x_4$. Thus, given any $\Psi^{g}(x)$ satisfying that $\textrm{sgn}(\Psi^{g}_{1,2,3}\cdot\Psi^{g}_{1,2,4}\cdot\Psi^{g}_{1,3,4}\cdot\Psi^{g}_{2,3,4})=-1$,\footnote{Intuitively, we can select $x_1$ and $x_2$ to be very close, and select $x_3$ and $x_4$ to satisfy $\textrm{sgn}(\Psi^{g}_{1,2,3}\cdot\Psi^{g}_{1,2,4})=-1$. Then due to the continuity of $\Psi^{g}$, there is $\textrm{sgn}(\Psi^{g}_{1,3,4}\cdot\Psi^{g}_{2,3,4})=1$.} there must be $\textrm{sgn}(A(x;\phi_{B}))\not=\textrm{sgn}(\Psi^{g}(x))$ for any $\phi_B$. Since $\textrm{sgn}((-\Psi^{g}_{1,2,3})\cdot(-\Psi^{g}_{1,2,4})\cdot(-\Psi^{g}_{1,3,4})\cdot(-\Psi^{g}_{2,3,4}))=-1$, there is also $\textrm{sgn}(A(x;\phi_{B}))\not=\textrm{sgn}(-\Psi^{g}(x))$.

Let $\Psi^{g}$ be any one of the ground-state functions whose sign cannot be represented via $A(x;\phi_{B})$, i.e., there exist $x'\not\in\mathcal{N}(\Psi^{g})$ such that $\textrm{sgn}(A(x';\phi_{B}))\not=\textrm{sgn}(\Psi^{g}(x'))$. Now we prove that if $\phi_{B}$ is continuous, there must be
\begin{equation}
    \mathcal{N}(A(x;\phi_{B}))\not\subset\mathcal{N}(\Psi^{g})\textrm{.}
\end{equation}
Specifically, we know that $A(x;\phi_{B})$ is continuous and $\textrm{sgn}(A(x';\phi_{B}))=\textrm{sgn}(-\Psi^{g}(x'))$. Assume, to the contrary, that $\mathcal{N}(A(x;\phi_{B}))\subset\mathcal{N}(\Psi^{g})$. Then according to the tiling property, we have the nodal cell $\Omega(\Psi^{g},x')\subset \mathbb{R}^{dN}\setminus \mathcal{N}(A(x;\phi_{B}))$, and due to the continuity of $A(x;\phi_{B})$, we know that the signs of both $A(x;\phi_{B})$ and $\Psi^{g}$ are unchanged in $\Omega(\Psi^{g},x')$. Thus, $\forall x\in\Omega(\Psi^{g},x')$, $\textrm{sgn}(A(x;\phi_{B}))=\textrm{sgn}(-\Psi^{g}(x))$. Since both $A(x;\phi_{B})$ and $\Psi^{g}(x)$ are antisymmetric, given any permutation $\pi$, there is $\forall x\in\Omega(\Psi^{g},x'_{\pi})$, $\textrm{sgn}(A(x;\phi_{B}))=\textrm{sgn}(-\Psi^{g}(x))$. Taking in the tiling property of Eq.~(\ref{equ39}), we achieve that $\forall x\not\in\mathcal{N}(\Psi^{g})$, there is $\textrm{sgn}(A(x;\phi_{B}))=\textrm{sgn}(-\Psi^{g}(x))$, which lead to contradiction with our three-electron counter-example above.

Now we have proved that if $\phi_{B}$ is continuous, there must be $\mathcal{N}(A(x;\phi_{B}))\not\subset\mathcal{N}(\Psi^{g})$, i.e., there exist an input $x''$ such that $A(x'';\phi_{B})=0$ and $\Psi^{g}(x'')\not=0$. Assume that $\psi_{\textrm{Han}}(x)=A(x;\phi_{B})\cdot\phi_{C}(x)=\Psi^{g}(x)$ for $\forall x$, then for any finite value of $\phi_{C}(x'')$, we always have $\psi_{\textrm{Han}}(x'')=A(x'';\phi_{B})\cdot\phi_{C}(x'')\not=\Psi^{g}(x')$, which leads to a contradiction. Thus, either $\phi_B$ or $\phi_C$ is discontinuous on $\mathbb{R}^{dN}$.
\end{proof}

\subsection{Achieving Universality with Continuous \texorpdfstring{$\phi_B$}{TEXT}}
In the above section, we show that there exist ground-state functions $\Psi^{g}(x)$ that $\psi_{\textrm{Han}}(x)$ have to represent via discontinuous $\phi_B$ or $\phi_C$, which makes finite-capacity neural networks difficult to approximate them, as also discussed in \citet{pfau2020ab}. Intuitively, this discontinuity comes from the two-electron dependence of $\phi_B$, i.e., $\phi_B$ cannot model the relation among multiple electrons.

To this end, we resort to the permutation-equivariant architecture designed in FermiNet, which includes $N$ multi-electron functions $\phi_{1}(x_{j};\{x_{\setminus j}\}),\cdots,\phi_{N}(x_{j};\{x_{\setminus j}\})$ that are invariant to any permutation in the set $\{x_{\setminus j}\}$, as described in Section~\ref{Preliminary}. In contrast, here we only need one function $\phi_{B}(x_{j};\{x_{\setminus j}\})$, and we instantiate $\psi'_{\textrm{pair}}(x)$ as
\begin{equation}
    \psi'_{\textrm{pair}}(x)=\!\!\!\!\!\!\prod_{1\leq i<j\leq N}\!\!\!\!\!\!\left(\phi_{B}(x_{j};\{x_{\setminus j}\})-\phi_{B}(x_{i};\{x_{\setminus i}\})\right)\textrm{.}
    \label{equ50}
\end{equation}
Now we prove that the model family of $\psi'_{\textrm{pair}}(x)$ is \emph{universal} to represent any ground-state wavefunction $\Psi^{g}$ using \emph{continuous} $\phi_{B}$, as stated below:
\begin{theorem}
(Continuous universality) For any ground-state wavefunction $\Psi^g(x)$, there exist a continuous multi-electron function $\phi_{B}$, such that $\forall x$, $\psi'_{\textrm{pair}}(x)=\Psi^{g}(x)$.
\label{theorem5}
\end{theorem}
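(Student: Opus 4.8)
The plan is to recognise $\psi'_{\textrm{pair}}(x)$ as a Vandermonde determinant in the scalar labels $y_j:=\phi_B(x_j;\{x_{\setminus j}\})$ and then to build $\phi_B$ so that this Vandermonde inherits, in turn, the node set, the sign, and finally the magnitude of $\Psi^g$. Writing $V(y)=\prod_{i<j}(y_j-y_i)$, one has $\psi'_{\textrm{pair}}(x)=V(y)=\det[\,y_j^{\,k-1}\,]_{1\le j,k\le N}$, so $\psi'_{\textrm{pair}}(x)=0$ exactly when two labels coincide, and $\textrm{sgn}(\psi'_{\textrm{pair}}(x))=\textrm{sgn}(\rho)$ where $\rho$ is the permutation that sorts $y_1,\dots,y_N$ into increasing order. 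As in the proof of Lemma~\ref{theorem1}, swapping $x_m$ and $x_n$ simply swaps $y_m$ and $y_n$, because each label depends only on its own coordinate and on the unordered set of the remaining coordinates; hence the labeling is equivariant and the sorting parity is the natural quantity to control.

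Next I would use the tiling property to pin down the target sign structure. Fixing a reference point with $\Psi^g>0$ and its nodal cell $\Omega_0$, antisymmetry gives $\Psi^g(x_\pi)=\textrm{sgn}(\pi)\Psi^g(x)$, so the $N!$ permuted cells $\pi\Omega_0$ tile $\mathbb{R}^{dN}\setminus\mathcal{N}(\Psi^g)$ and on each cell $\textrm{sgn}(\Psi^g)=\textrm{sgn}(\pi)$. The goal then reduces to producing a continuous equivariant labeling $x\mapsto(y_1,\dots,y_N)$, $y_j=\phi_B(x_j;\{x_{\setminus j}\})$, whose induced ordering is the identity ordering on $\Omega_0$ and whose sorting permutation is exactly $\pi$ on each cell $\pi\Omega_0$. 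Because adjacent cells of the tiling are related by a transposition, crossing the shared nodal face must make the two corresponding labels cross; this forces the zero set $\{\exists\,i\neq j:\ y_i=y_j\}$ of $V(y)$ to coincide with $\mathcal{N}(\Psi^g)$, which is precisely the alignment of nodes we need.

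With such a labeling $u_j=u(x_j;\{x_{\setminus j}\})$ in hand, $V(u)$ is continuous and has the same node set and the same sign as $\Psi^g$ everywhere. I would then define the symmetric ratio $P(x)=\Psi^g(x)/V(u(x))$, which is a positive, permutation-symmetric function away from the nodes, and verify that it extends continuously and positively across $\mathcal{N}(\Psi^g)$ by matching the vanishing behaviour of the two antisymmetric functions there. Finally I would absorb the magnitude into the labels by setting $\phi_B(x_j;\{x_{\setminus j}\})=P(x)^{1/\binom{N}{2}}\,u(x_j;\{x_{\setminus j}\})$; since $P$ is symmetric, this is still a valid single-electron-plus-environment function, it remains continuous, and it yields $\psi'_{\textrm{pair}}(x)=P(x)\,V(u(x))=\Psi^g(x)$ for all $x$.

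I expect the construction of the continuous equivariant labeling (the map $u$ with node set matching $\mathcal{N}(\Psi^g)$) to be the main obstacle, and the step where the ground-state assumption is indispensable. The difficulty is topological: one must assign real-valued labels consistently over the whole of $\mathbb{R}^{dN}$ so that label crossings occur exactly on the nodal surfaces and the ordering parity agrees with $\textrm{sgn}(\pi)$ on every cell. This is possible only because the tiling property makes the nodal-cell complex look like the arrangement of orderings of $N$ points on a line (cells indexed by permutations, walls by transpositions); for a general antisymmetric function, whose nodal cells need not be permutation-images of a single domain, no such global labeling need exist — mirroring the obstruction exposed for $\psi_{\textrm{Han}}$ in the Discontinuity Theorem. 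The remaining continuity check for $P$ at the nodes is the only other technical point, and it is handled by the matched node sets coming from the labeling step.
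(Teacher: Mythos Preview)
Your framework is the right one and matches the paper's: view $\psi'_{\textrm{pair}}$ as a Vandermonde in scalar labels, use tiling to identify the target sign on each nodal cell with a permutation parity, and absorb the magnitude by a symmetric $\binom{N}{2}$-th root factor. But the step you yourself flag as ``the main obstacle'' is in fact left unresolved: you never construct the continuous equivariant labeling $u$ whose coincidence set $\{u_i=u_j\}$ equals $\mathcal{N}(\Psi^g)$, and even granting such a $u$ you would still need to show that $P=\Psi^g/V(u)$ extends to a \emph{positive} continuous function across the node set (i.e., that the two antisymmetric functions vanish at matching rates), which is a second unverified claim.

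The paper bypasses both difficulties with a simpler trick: it does \emph{not} insist on a continuous $u$. It takes $u_j$ to be the discontinuous integer rank $\pi^{*}(x,j)$ (the position of $x_j$ after the permutation $\pi^{*}$ carrying $x$ into the reference cell $\Omega^{*}$), so that on each nodal cell $V(u)=\sigma(\pi^{*})\prod_{i<j}(j-i)$ is a nonzero constant of the correct sign. Continuity of $\phi_B$ then comes entirely from the magnitude factor: setting
\[
\phi_B(x_j;\{x_{\setminus j}\})=\pi^{*}(x,j)\cdot\Bigl(\Psi^g(x_{\pi^{*}})\big/\textstyle\prod_{i<j}(j-i)\Bigr)^{1/\binom{N}{2}},
\]
the jumps in $\pi^{*}(x,j)$ across nodal walls are killed because $\Psi^g(x_{\pi^{*}})\to 0$ there, and one checks directly that $\psi'_{\textrm{pair}}(x)=\sigma(\pi^{*})\,\Psi^g(x_{\pi^{*}})=\Psi^g(x)$. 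In other words, the symmetric factor you call $P^{1/\binom{N}{2}}$ is simply $|\Psi^g|^{1/\binom{N}{2}}$ up to a constant, and no node-matching continuous labeling is needed at all.
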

\begin{proof}
For any ground-state wavefunction $\Psi^g(x)$, according to its tiling property, we can select one of the nodal cell $\Omega^{*}$, such that $\forall x\in\Omega^{*}$, there is $\Psi^g(x)>0$. Besides, there exist a permutation rule $\pi^{*}$ to map the points from other nodal cells into the selected $\Omega^{*}$, i.e., $\forall x\in\mathbb{R}^{dN}\setminus \mathcal{N}(\Psi^g)$, there is $x_{\pi^{*}}\in \Omega^{*}$ and $\Psi^g(x_{\pi^{*}})>0$, where $x_{\pi^{*}}$ is the input $x$ permuted by $\pi^{*}$. We use $\sigma(\pi^{*})$ denote the sorting sign of $\pi^{*}$ and $\pi^{*}(x,j)$ denote the index of $x_{j}$ after permutation. Then we let $\phi_{B}$ to satisfy that $\forall x\in\mathbb{R}^{dN}\setminus \mathcal{N}(\Psi^g)$, there is
\begin{equation}
    \phi_{B}(x_{j};\{x_{\setminus j}\})=\pi^{*}(x,j)\cdot\left(\frac{\Psi^{g}(x_{\pi^{*}})}{\prod_{i<j}(j-i)}\right)^{\frac{2}{N(N-1)}}\textrm{;}
    \label{equ43}
\end{equation}
and $\forall x\in\mathcal{N}(\Psi^g)$, there is $\phi_{B}(x_{j};\{x_{\setminus j}\})=0$, where $j=1,\cdots,N$. Note that the tiling property also claims that if two points $x$ and $x'$ come from the same nodal cell (i.e., $\Omega(\Psi^{g},x)=\Omega(\Psi^{g},x')$), then $\pi^*$ executes the same permutation operation on them (i.e., $\pi^{*}(x,j)=\pi^{*}(x',j)$ for any index $j$). Thus, the function $\pi^{*}(x,j)$ is constant inside any nodal cell, and due to the continuity of $\Psi^{g}$, we know that $\phi_{B}$ is continuous on $\mathbb{R}^{dN}\setminus \mathcal{N}(\Psi^g)$. Furthermore, when the input $x$ approaches to the node set $\mathcal{N}(\Psi^g)$, the value of $\Psi^{g}(x_{\pi^{*}})$ will consistently approach to zero. So when $x\rightarrow \mathcal{N}(\Psi^g)$, there is $\phi_{B}(x_{j};\{x_{\setminus j}\})\rightarrow 0$ for any index $j$, by which we can conclude that $\phi_{B}$ is continuous on the entire space of $\mathbb{R}^{dN}$. Using the construction of continuous $\phi_{B}$ in Eq.~(\ref{equ43}), we have
\begin{equation*}
    \begin{split}
        \psi'_{\textrm{pair}}(x)=&\prod_{i<j}\left(\phi_{B}(x_{j};\{x_{\setminus j}\})-\phi_{B}(x_{i};\{x_{\setminus i}\})\right)\\
        =&\sigma(\pi^{*})\cdot\prod_{i<j}(j-i)\cdot\left(\frac{\Psi(x_{\pi^{*}})}{\prod_{i<j}(j-i)}\right)^{\frac{2}{N(N-1)}}\\
        =&\sigma(\pi^{*})\cdot\Psi(x_{\pi^{*}})=\sigma(\pi^{*})\cdot\sigma(\pi^{*})\cdot\Psi(x)=\Psi(x)\textrm{,}
    \end{split}
\end{equation*}
which holds for $\forall x$.
\end{proof}

As to the antisymmetric function $\Psi(x)$ that is not ground-state, there still exist $\phi_{B}$ (but may not be continuous) making $\psi'_{\textrm{pair}}(x)$ represent $\Psi$, as stated below:
\begin{corollary}
\label{corollary1}
(Universality) For any antisymmetric function $\Psi(x)$, there exist a multi-electron function $\phi_{B}$, such that $\forall x$, $\psi'_{\textrm{pair}}(x)=\Psi(x)$.
\end{corollary}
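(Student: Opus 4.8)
The plan is to reuse the Vandermonde-style reconstruction from the proof of Theorem~\ref{theorem5}, but to replace the tiling property, which is unavailable for a general antisymmetric $\Psi$, by an explicit coordinate sorting together with a \emph{discontinuous} sign correction. First I would observe that on the support $\mathbb{R}^{dN}\setminus\mathcal{N}(\Psi)$ all coordinates are distinct, since $x_i=x_j$ would force $\Psi(x)=-\Psi(x)=0$ by antisymmetry. Fixing any total order $\prec$ on $\mathbb{R}^{d}$ (for instance the lexicographic order), every non-node configuration then has a well-defined rank function $\tau(x,j)\in\{1,\dots,N\}$ giving the position of $x_j$ in the $\prec$-sorted list. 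Crucially $\tau(x,j)$ depends only on $x_j$ and the set $\{x_{\setminus j}\}$, namely $\tau(x,j)=1+\#\{i:x_i\prec x_j\}$, so it is admissible as an output of $\phi_B$. Writing $x_{\textrm{sort}}$ for the sorted configuration and $\sigma(\tau)$ for the sign of the sorting permutation, antisymmetry gives $\Psi(x)=\sigma(\tau)\cdot\Psi(x_{\textrm{sort}})$, and both $|\Psi(x_{\textrm{sort}})|$ and $s(x):=\textrm{sgn}(\Psi(x_{\textrm{sort}}))$ are symmetric functions of the unordered configuration, hence also recoverable from $(x_j,\{x_{\setminus j}\})$.

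Next I would define $\phi_B$ explicitly. On $\mathcal{N}(\Psi)$ I set $\phi_B\equiv 0$. On the support I put $c(x)=\big(|\Psi(x_{\textrm{sort}})|/\prod_{i<j}(j-i)\big)^{2/(N(N-1))}$, a nonnegative symmetric amplitude, and define $\phi_B(x_j;\{x_{\setminus j}\})=\tau(x,j)\cdot c(x)$ whenever $s(x)=+1$, exactly mirroring Theorem~\ref{theorem5}. When $s(x)=-1$ I output the same values but with ranks $1$ and $2$ transposed, that is, I replace $\tau(x,j)$ by $\widetilde{\tau}(x,j)$ equal to $\tau(x,j)$ except that the electron of rank $1$ receives $2$ and the electron of rank $2$ receives $1$. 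This swap is set-invariant because whether $x_j$ has rank $1$ or $2$ is itself a function of $x_j$ and $\{x_{\setminus j}\}$.

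Finally I would verify the product and isolate the obstacle. Using the identity $\prod_{i<j}(\rho(j)-\rho(i))=\sigma(\rho)\prod_{i<j}(j-i)$ for any permutation $\rho$ of $\{1,\dots,N\}$, the case $s(x)=+1$ reduces to $\sigma(\tau)\cdot|\Psi(x_{\textrm{sort}})|=\sigma(\tau)\cdot\Psi(x_{\textrm{sort}})=\Psi(x)$, while the case $s(x)=-1$ acquires an extra factor $-1$ from the transposition and gives $-\sigma(\tau)\cdot|\Psi(x_{\textrm{sort}})|=\sigma(\tau)\cdot\Psi(x_{\textrm{sort}})=\Psi(x)$ as well; on nodes every $\phi_B$ value is zero, so the product vanishes and matches $\Psi$. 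I would then confirm admissibility, which holds because each ingredient $\tau(x,j)$, the swap decision, $c(x)$ and $s(x)$ depends only on $x_j$ and the unordered set $\{x_{\setminus j}\}$. The main obstacle, and the precise reason continuity is lost here, is the step that Theorem~\ref{theorem5} circumvented via tiling: for a general $\Psi$ the canonical representative $x_{\textrm{sort}}$ need not satisfy $\Psi(x_{\textrm{sort}})>0$, so the correction sign $s(x)$ genuinely varies and the rank transposition jumps across the interior nodal surfaces of $\Psi$ inside the sorted domain, which forces $\phi_B$ to be discontinuous. The delicate point is to implement this jump in a set-invariant manner so that $\phi_B$ stays a legitimate single-coordinate-plus-set function, and the rank-transposition device is exactly what guarantees this.
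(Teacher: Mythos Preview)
Your proof is correct and follows essentially the same approach as the paper: sort the coordinates, assign rank-based values scaled by a symmetric amplitude $(|\Psi(x_{\textrm{sort}})|/\text{const})^{2/(N(N-1))}$, and insert a single controlled modification to absorb the sign of $\Psi$ at the sorted configuration. The only cosmetic difference is the sign device---you transpose the values at ranks $1$ and $2$ when $s(x)=-1$, whereas the paper assigns the scaled values $-\textrm{sgn}(\Psi(x_{\pi})),\,0,\,3,\,4,\dots,N$ so that the sign sits directly in the first entry (with the normalizing constant $\Lambda$ adjusted accordingly).
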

\begin{proof}
Similar to the proof routine of Appendix B in \citet{pfau2020ab}, given a certain permutation $\pi$, we use $\sigma(\pi)$ to denote the sorting sign of $\pi$. For notation compactness, we define $\Lambda$ as 
\begin{equation*}
    \Lambda=\prod_{3\leq j \leq N}j\left(j+\textrm{sgn}(\Psi(x_{\pi}))\right)\prod_{3\leq i<j\leq N}(j-i)\textrm{,}
\end{equation*}
where $\Psi(x)$ is the antisymmetric function that we aim to represent. It is easy to verify that $\Lambda>0$. Then we could let $\phi_{B}$ satisfy that
\begin{equation*}
    \begin{split}
        &\phi_{B}(x_{\pi_1};\{x_{\pi_{\setminus 1}}\})=-\textrm{sgn}(\Psi(x_{\pi}))\cdot\left(\frac{|\Psi(x_{\pi})|}{\Lambda}\right)^{\frac{2}{N(N-1)}}\textrm{;}\\
        &\phi_{B}(x_{\pi_2};\{x_{\pi_{\setminus 2}}\})=0\textrm{;}\\
        &\phi_{B}(x_{\pi_j};\{x_{\pi_{\setminus j}}\})=j\cdot\left(\frac{|\Psi(x_{\pi})|}{\Lambda}\right)^{\frac{2}{N(N-1)}}\!\!\!\textrm{, }\forall 3\leq j\leq N\textrm{,}
    \end{split}
\end{equation*}
where $x_{\pi_j}$ is the $j$-th element of $x_{\pi}$. It is easy to verify that
\begin{equation*}
    \begin{split}
        \psi'_{\textrm{pair}}(x)=&\prod_{i<j}\left(\phi_{B}(x_{j};\{x_{\setminus j}\})-\phi_{B}(x_{i};\{x_{\setminus i}\})\right)\\
        =&\sigma(\pi)\cdot\Lambda\cdot\textrm{sgn}(\Psi(x_{\pi}))\cdot\frac{|\Psi(x_{\pi})|}{\Lambda}\\
        =&\sigma(\pi)\cdot\Psi(x_{\pi})=\sigma(\pi)\cdot\sigma(\pi)\cdot\Psi(x)=\Psi(x)\textrm{,}
    \end{split}
\end{equation*}
which holds for $\forall x$.
\end{proof}

\textbf{Connection with $\psi_{\textrm{Fermi}}(x)$.} The form of $\psi'_{\textrm{pair}}(x)$ can be written as the determinant of a Vandermonde matrix
\begin{equation*}
    \psi'_{\textrm{pair}}(x)=\begin{vmatrix}
    1 \! & \! \cdots \! & \! 1 \\
    \phi_{B}(x_{1};\{x_{\setminus 1}\}) \!  & \! \cdots \! & \! \phi_{B}(x_{N};\{x_{\setminus N}\}) \\
    \phi^{2}_{B}(x_{1};\{x_{\setminus 1}\}) \!  & \! \cdots \! & \! \phi^{2}_{B}(x_{N};\{x_{\setminus N}\}) \\
     \vdots \! &  \! \vdots \! & \! \vdots \\
     \phi^{N-1}_{B}(x_{1};\{x_{\setminus 1}\}) \!  & \! \cdots \! & \! \phi^{N-1}_{B}(x_{N};\{x_{\setminus N}\})
    \end{vmatrix}\textrm{,}
\end{equation*}
which can be regarded as a special case of $\psi_{\textrm{Fermi}}(x)$ where $\phi_{i}=\phi_{B}^{i-1}$ (but $\psi'_{\textrm{pair}}(x)$ only requires $O(N^2)$ computation). Therefore, the model family of $\psi_{\textrm{Fermi}}(x)$ is strictly more powerful than $\psi'_{\textrm{pair}}(x)$, and consequently the continuous universality of representing ground-state wavefunctions (claimed in Theorem~\ref{theorem5}) also holds for $\psi_{\textrm{Fermi}}(x)$.

\textbf{Extension of $\psi'_{\textrm{pair}}(x)$.} We can extend $\psi'_{\textrm{pair}}(x)$ to involve more permutation-equivariant functions, e.g., we construct
\begin{equation}
    \psi''_{\textrm{pair}}(x)=\!\!\!\!\!\!\prod_{1\leq i<j\leq N}\!\begin{vmatrix} \phi_{A}(x_{i};\{x_{\setminus i}\}) &  \phi_{A}(x_{j};\{x_{\setminus j}\}) \\
    \phi_{B}(x_{i};\{x_{\setminus i}\}) & \phi_{B}(x_{j};\{x_{\setminus j}\})
    \end{vmatrix}\textrm{.}
    \label{equ51}
\end{equation}
Specially, if we trivially choose $\forall x$, $\phi_{A}(x_{j};\{x_{\setminus j}\})=1$ be a constant function, then $\psi''_{\textrm{pair}}(x)$ degenerates to $\psi'_{\textrm{pair}}(x)$. Therefore, the model family of $\psi''_{\textrm{pair}}(x)$ is also strictly more powerful than $\psi'_{\textrm{pair}}(x)$, and the similar conclusions in Theorem~\ref{theorem5} and Corollary~\ref{corollary1} also hold for $\psi''_{\textrm{pair}}(x)$.

\label{complexity}
\textbf{Computational complexity of implementing} $\psi_{\textrm{pair}}(x)$. As analyzed in the Section \RomanNumeralCaps4 B of \citet{pfau2020ab}, the complexity of implementing the multi-electron orbitals $\phi_{1},\cdots,\phi_{N}$ is $O(N^2)$ for single-atom systems and fixed numbers of hidden units. After substituting the Slater determinant in FermiNet with our pairwise constructions (as described in Eq.~\eqref{equ50} and Eq.~\eqref{equ51}), we know that the cost of computing $\phi_{A}$ and $\phi_{B}$ is no more than $O(N^2)$. Besides, there are $C_{N}^{2}=N\times(N-1)/2$ terms and each term $\mathcal{A}\circ F(x_i,x_j)$ requires $O(1)$ extra computation on top of $\phi_{B}$ (and $\phi_{A}$). Therefore, totally $\psi'_{\textrm{pair}}(x)$ and $\psi''_{\textrm{pair}}(x)$ still require $O(N^2)$ computation.

\begin{table}[t]
  \centering
  \footnotesize
  \vspace{-0.2cm}
  \caption{Ground state energy. The values of `Exact' column come from \citet{chakravorty1993ground}. The values of `$\psi'_{\textrm{pair}}$' and `$\psi''_{\textrm{pair}}$' are averaged on the last 1,000 iterations with sampling stride of 10.}
  \vspace{0.25cm}
  \renewcommand*{\arraystretch}{1.2}
    \begin{tabular}{llll}
    \toprule
Atom & $\psi''_{\textrm{pair}}$ & $\psi'_{\textrm{pair}}$ & Exact   \\
\midrule
 Li & $-7.4782$ & $-7.4781$ & $-7.47806032$ \\
 Be & $-14.6673$ & $-14.6664$ & $-14.66736$ \\
 B & $-24.5602$ & $-24.4475$ & $-24.65391$ \\
 C & $-37.3531$ & $-37.2785$ & $-37.8450$ \\
 N & $-53.1855$ & $-53.0626$ & $-54.5892$ \\
    \bottomrule
    \end{tabular}%
    \label{table1}
\end{table}%

\subsection{Empirical Evaluation}
In the experiments, we follow the main training settings in~\citet{pfau2020ab}, as briefly recapped below. The FermiNet architecture includes four layers, each having 256 hidden units for the one-electron stream and 32 hidden units for the two-electron stream. The non-linear activation function is Tanh. To empirically estimate the gradient, 2,048 samples drawn from $\Psi_{\textrm{Fermi}}(x)$ are used in each training batch, with ensemble number $K=16$. These samples are updated via the random-walk MCMC algorithm, with 10 iterations and a move width of $0.02$. The optimizer is a modified version of KFAC~\citep{martens2015optimizing}, with the initial learning rate of $10^{-4}$.


In Table~\ref{table1}, we show some initial results using our pairwise constructions, where we substitute $\Psi_{\textrm{Fermi}}(x)$ with $\Psi'_{\textrm{pair}}(x)$ and $\Psi''_{\textrm{pair}}(x)$, respectively. Here we do not apply Hartree-Fock initialization. Our computational cost is less than FermiNet from two aspects: first, we use only one or two permutation-equivariant functions $\phi_{A}$ and $\phi_{B}$ for different atoms, while the number of functions $\phi_{1},\cdots,\phi_{N}$ used in FermiNet scales with $N$; second, we do not involve determinant operators and only require $O(N^2)$ computation. However, as seen in the results, our methods empirically ask for permutation-equivariant functions with higher model capacity for larger atoms. This can be achieved via applying more powerful network architectures, and we leave this to future exploration.

\section{Conclusion}
This paper develops a simple way toward $O(N^{2})$ universal antisymmetry in FermiNet by pairwise constructions. Our methods can also seamlessly adapt to other determinant-based Ansatz like PauliNet~\citep{hermann2020deep}. The universality of our pairwise Ansatz is established on the employment of multi-electron neural networks, which are proved to be universal approximators~\citep{hornik1989multilayer}. As to ground-state wavefunctions, we prove the existence of continuous construction, which facilitates practical learning via finite-capacity networks. 

Although theoretically, we demonstrate the (continuous) universality of pairwise constructions, empirically, we may require a large model capacity to fit the ground-truth $\phi_{A}$, $\phi_{B}$. This limitation is reflected in our initial experiments, as seen in Table~\ref{table1}. Nevertheless, the model architecture used in FermiNet is still quite shallow from the aspect of deep learning, and many model designs could be applied to obtain much more powerful multi-electron functions. Another limitation is that there is no clear advantage compared to the Slater determinant when $N$ is small, because the neural network part dominates the computation in this case.


\bibliographystyle{plainnat}
\bibliography{main}

\end{document}